\def\boxit#1{\vbox{\hrule\hbox{\vrule\kern3pt
        \vbox{\kern3pt#1\kern3pt}\kern3pt\vrule}\hrule}}
\def\reals{ { {\rm  I \kern-0.15em R }  } }
\def\complex{ {\,{{\rm C} \kern-0.50em \raise0.20ex {  |}}\, }}
\def\Rbf{{\bf R}}
\def\be{\begin{equation}}
\def\ee{\end{equation}}
\def\scalefig#1{\epsfxsize #1\textwidth}
\newcommand{\n}{\mbox{${\bf n}(t)$}}
\def\Rxx{\Rbf_{\ssstyle X\kern-.1em X}}
\let\ssstyle=\scriptscriptstyle
\def\etal{{\it et al. \/}}
\def\ie{{\it i.e.,\ \/}}
\def\Kout{\setbox1=\hbox{\Huge\bf K}\hbox to
1.05\wd1{\hspace{.05\wd1}% [arxiv_v2: inline-PS \special stripped, 290 chars]}}
\def\Sout{\setbox1=\hbox{\Huge\bf S}\hbox to 1.05\wd1{\hspace{.05\wd1}% [arxiv_v2: inline-PS \special stripped, 290 chars]}}

\newcounter{algleo}
\newlength{\lefttab}
  {\trivlist
   \topsep=0pt\itemsep=0pt
   \addtolength{\lefttab}{1.25em}
   \leftskip=\lefttab}%
  {\endtrivlist}

\def\mbbE{\mathbb{E}}
\def\nn{{\nonumber}}
\def\scalefig#1{\epsfxsize #1\textwidth}

\newtheorem{theorem}{Theorem}

\begin{document}

\title{\LARGE\bf   Dynamic Pricing under Finite Space Demand Uncertainty: A Multi-Armed Bandit with Dependent Arms}
\author{Pouya Tehrani, Yixuan Zhai, Qing Zhao\\
 Department of Electrical and Computer Engineering\\
 University of California, Davis, CA 95616,
 \{potehrani,yxzhai,qzhao\}@ucdavis.edu\\[.5em]
  \bf }
\maketitle%

\begin{abstract}
%\vspace{-0.8em}

We consider a\footnotetext{This work was supported in part by the
National Science Foundation under Grant CCF-0830685 and by the Army
Research Office under Grant W911NF-08-1-0467.} dynamic pricing
problem under unknown demand models. In this problem a seller offers
prices to a stream of customers and observes either success or
failure in each sale attempt. The underlying demand model is unknown
to the seller and can take one of $N$ possible forms. In this paper,
we show that this problem can be formulated as a multi-armed bandit
with dependent arms. We propose a dynamic pricing policy based on
the likelihood ratio test. We show that the proposed policy achieves
complete learning, \ie it offers a bounded regret where regret is
defined as the revenue loss with respect to the case with a known
demand model. This is in sharp contrast with the logarithmic growing
regret in multi-armed bandit with independent arms.

\end{abstract}

\vspace{1em}

\begin{IEEEkeywords}
 Dynamic Pricing, multi-armed bandit, maximum likelihood detection.
\end{IEEEkeywords}

\section{Introduction}\label{sec:inc}
The sequential pricing of a certain good under an unknown demand
model is a fundamental management science problem and has various
applications in financial services, electricity market, online
posted-price auctions of digital goods, and radio spectrum
management. In this problem, a seller offers a sequence of prices of
the good to a stream of potential customers and observes either
success or failure in each sale attempt. The characteristic of each
customer is assumed to be identical and is described by a demand
model $\rho(p)$ which prescribes the probability of a successful
sale at the offered price $p$. The demand model is assumed to be
unknown to the seller and needs to be learnt online through
sequential observations. Unlike the conventional operation research
and management science constraint on the inventory, we assume that
there is an unlimited supply of the good (consider, for example, the
online posted-price auction where there is an infinity supply of the
digital good).
 The objective is to maximize the total revenue over a horizon of length $T$ by choosing sequentially
the price at each time based on the sale history. When choosing the
price at each step, the seller confronts a tradeoff between
exploring the demand model (learning) and exploiting the price with
the best selling history (earning). As the seller gains information
about the unknown demand model from the past selling history, the
seller's pricing strategy can improve over time.
%A fundamental
%problem to study is the cost of not knowing the demand model. We aim
%to compare the expected revenue of the seller's strategy with the
%expected revenue of the optimal fixed-price strategy under the known
%demand model. This metric is called regret or cost of learning which
%is introduced in the multi-armed bandit literature.

% As a conventional assumption
%(\cite{??5,7,8,13??}) we consider auctions that are
%\emph{strategyproof} meaning that the customers have no incentive to
%lie about their demand model. It is shown in \cite{??5??} that this
%assumption is equivalent to requiring the seller to offer a price to
%a customer only based on the observation from previous customers.

\subsection{Dynamic Pricing as A Multi-Armed Bandit}

Dynamic pricing can be formulated as a special multi-armed bandit
(MAB) problem, and the connection was explored as early as 1974 by
Rothschild in~\cite{Rothschild:74}. A mathematical abstraction of
MAB in its basic form involves $N$ independent arms and a single
player. Each arm, when played, offers independent and identically
distributed (i.i.d.) random reward drawn from a distribution with
unknown mean $\theta_i$. At each time, a player chooses one arm to
play, aiming to maximize the expected total rewards obtained over a
horizon of length $T$. Depending on whether the unknown mean
$\theta_i$ of each arm is treated as random variables with known
prior distributions or as a deterministic quantity, MAB problems can
be formulated and studied within either a Bayesian or a non-Bayesian
framework.

Within the Bayesian framework, system unknown parameters are random
variables, and the design objective is policies with good
\emph{average} performance (averaged over the prior distributions of
the unknowns). Often, the performance of a policy is measured by the
total discounted reward or the average reward over an infinite
horizon. By treating the player's \emph{a posteriori} probabilistic
knowledge (updated from the \emph{a priori} distribution using past
observations) on the unknown parameters as the system state, Bellman
in 1956 abstracted and generalized the Bayesian MAB to a special
class of Markov decision processes (MDP)~\cite{Bellman:56}. In
1970s, Gittins showed that the optimal policy of MAB has a simple
index structure---the so-called Gittins index policy~\cite{Gittins}.
This leads to linear (in the number $N$ of arms) complexity in
finding the optimal policy, in contrast to the exponential
complexity one would have to face if the problem was solved as a
general MDP.

Within the non-Bayesian framework, the unknowns in the reward models
are treated as deterministic quantities and the design objective is
universally (over all possible values of the unknowns) good
policies. A commonly used performance measure is the so-called
regret (a.k.a. the cost of learning) defined as the expected total
reward loss with respect to the ideal scenario of known reward
models (under which the arm with the largest reward mean is always
played). To minimize the regret, the player needs to identify the
best arm without engaging other inferior arms too often. In 1985,
Lai and Robbins \cite{Lai&Robbins:85} showed that the minimum regret
grows at logarithmic order with $T$ and constructed a policy to
achieve the minimum regret for certain reward distributions.

The connection between MAB and dynamic pricing is now readily seen:
each potential price $p$ is an arm with an unknown reward mean
$p\rho(p)$ (the expected revenue at price $p$). When the seller can
choose any price within an interval, the problem becomes a
continuum-armed
bandit~\cite{Kleinberg:04,Aueretal:07,Agrawal:95,Cope:09,Kleinberg&Leighton:03}.
Kleinberg and Leighton in \cite{Kleinberg&Leighton:03} specifically
consider an online posted price auction under an unknown demand
model which is a special case of the continuum-armed bandit problem.
In~\cite{Rothschild:74}, Rothschild considered the case where the
seller can choose prices from a finite set and formulated the
problem as a classic MAB within the Bayesian framework assuming
prior probabilistic knowledge of the demand model. His focus was on
the question whether a seller who follows an optimal policy (in
terms of total discounted revenue over an infinite horizon) will
eventually obtain complete information about the underlying demand
model thus settle at the optimal price. It was shown
in~\cite{Rothschild:74} that the answer is in general negative. In
light of the theories on MAB developed since 1974, this conclusion
is, perhaps, no longer surprising. The optimal policy of a Bayesian
MAB will always settle at a single arm (after a finite number of
visits to other arms) which is not necessarily the best one.
Following Rothschild, McLennan showed that incomplete learning can
occur even when the seller can choose among a continuum of
prices~\cite{McLennan:84}. McLennan adopted a simple binary demand
model: it is known that one of two possible demand models
$\rho_0(p)$ and $\rho_1(p)$ pertains with prior probability $1-q_0$
and $q_0$, respectively.

Even though the optimal policy offers the best performance
\emph{averaged} over all possible demand environments under the
known prior distribution, the fact that incomplete learning occurs
with positive probability can be unsettling given that a seller may
only see one realization of the demand model and thus cares about
only the revenue under this specific realization rather than on the
average over all realizations it \emph{might} have seen. In this
case, a policy that grantees complete learning under every possible
demand model may be desirable, even though it may not offer the best
average performance.

This issue was addressed by Harrison \etal in~\cite{BDPP:10} where
they adopted the same binary demand model considered by
McLennan~\cite{McLennan:84} but focused on achieving complete
learning under each realization of the demand model rather than the
best average performance. The myopic Bayesian policy (MBP) and its
modified versions were studied. It was shown that although MBP can
lead to incomplete learning, a modified version of MBP will always
learn the underlying demand model completely and settle at the
optimal price. If we borrow the performance measure of regret that
is often used within the non-Bayesian framework, complete learning
implies a finite regret that does not grow unboundedly with the
horizon length $T$.

\subsection{Main Results}

In this paper, we provide a different approach to the problem
considered by McLennan in~\cite{McLennan:84} and Harrison \etal
in~\cite{BDPP:10}. In particular, we adopt the non-Bayesian
framework which does not assume any probabilistic prior knowledge on
which demand model may pertain. We show that completely learning
(\ie finite regret) can be achieved without this prior knowledge.
Furthermore, in contrast to the modified MBP proposed
in~\cite{BDPP:10}, our proposed policy achieves finite regret
without complete knowledge of the demand curves $\{\rho_0(p),
\rho_1(p)\}$. The only knowledge required in our proposed policy is
the optimal prices $\{p_0^*, p_1^*\}$ under each demand model and
the values $\rho_i(p_j^*)$'s ($i,j \in \{0, 1\}$) of the demand
models at these two prices. Our results also generalize to the case
with an arbitrary number $N$ of potential demand curves.

Our approach is based on a multi-armed bandit formulation of the
problem within the non-Bayesian framework. In our formulation, each
arm $1 \leq j \leq N$ represents the optimal price $p_j^*$ under
demand model $\rho_j(p)$. Since all arms share the same underlying
demand model,
 arms are correlated. In other words, observations from one arm also provide information on the quality of other
 arms by revealing the underlying demand model. Recognizing the detection component of this bandit problem
 with dependent arms, we propose a policy based on the likelihood ratio test (LRT) and show that
it has finite regret. Compared to \cite{BDPP:10}, this result on
complete learning is established in a considerably simpler manner.
Furthermore, simulation examples demonstrate that the proposed LRT
policy can outperform the modified MBP policy (CMBP) proposed in
\cite{BDPP:10}.

By introducing exploration prices (the prices with the largest
Chernoff distance between the two demand models that are currently
detected as most likely), we show that a variation of the
 LRT policy can improve the rate of learning
the underlying demand model and reduce regret. This enhancement,
however,
 requires more knowledge on the demand curves than in the LRT policy.

In the context of multi-armed bandit, this result provides an
interesting case where dependencies across arms can be exploited to
achieve finite regret that does not grow unboundedly with the
 horizon length $T$. This is in sharp contrast to a naive approach that
 ignores arm dependencies and directly applies the classic MAB policies. The latter would have led
  to a regret that grows logarithmic with $T$.

\subsection{Related Work}
Within the Bayesian framework, following Rothschild
(\cite{Rothschild:74}), Easley and Kiefer \cite{Easley&Kiefer:88}
and Aghion \etal \cite{Aghion&etal:91} also studied the
achievability of complete learning  Aviv and Pazgal in
\cite{Aviv&Pazgal:05} considered parametric uncertainty in the
demand model where a prior distribution of the unknown parameter is
assumed known. They formulated the dynamic pricing problem as a
partially observable Markov decision process (POMDP) and developed
upper bounds on the performance of the optimal policy. They also
proposed an active-learning heuristic policy with near optimal
performance. Farias and Roy in
 \cite{Farias&Roy:09} considered a similar problem under inventory constraints and
  Poisson arrivals of customers and developed near optimal heuristic policies.
 Keller and Rady in \cite{Keller&Rady:99} considered the case under infinite horizon
with discounted reward where the demand model may change over time.
In order to learn the underlying demand model, they studied two
qualitatively different heuristics based on exploration (deviating
from myopic policy) and exploitation (close to myopic policy).

 Within the non-Bayesian framework, besides regret, another metric mainly
considered in the analysis of auction mechanism
\cite{Yossef&etal:02,Blum&etal:03,Fiat&etal:02,Goldberg&etal:01} is
the competitive ratio defined as $\frac{R(S^{\text{opt}})}{R(S)}$
where $S$ is the seller's strategy, $S^{\text{opt}}$ is the optimal
fixed-price strategy under the known demand model, and $R(.)$ is the
expected revenue function. Blum \etal showed in \cite{Blum&etal:03}
that there are randomized pricing policies achieving competitive
ratio $1 + \epsilon$ for any $\epsilon > 0$. This result indicates
that $R(S)$ can converge to $R(S^{\text{opt}})$ but it does not
reveal the rate of convergence. In this paper we analyze the
additive regret ${R(S^{\text{opt}})}-{R(S)}$ (a more strict metric
than competitive ratio)
 and focus on the growth of regret with the time horizon length.
 %Segal in \cite{Segal:03}
%considers \emph{strategy-proof} (the customers have no incentive to
%lie about their demand model) off-line multi-unit auction mechanisms
%(a multi-unit auction is one in which buyers may purchase more than
%one copy of the good). Segal compares the expected regret of the
%optimal strategy-proof off-line mechanism with that of the optimal
%on-line posted-price mechanism under three assumptions on the space
%$\mathcal{D}$ of possible demand curves: $\mathcal{D}$ is a finite
%set, $\mathcal{D}$ is parameterized by a finite-dimensional
%Euclidean space, and when $\mathcal{D}$ is arbitrary.

As mentioned earlier, Kleinberg and Leighton in
\cite{Kleinberg&Leighton:03} studied dynamic pricing (online posted
price auctions) as a special continuum-armed bandit problem. In
particular, they analyzed the regret for three different cases of
demand models. In the first scenario the customer's evaluations
equal to an unknown single price in $[0, 1]$ and the customers will
only accept the offered price if it is below their evaluation.
Kleinberg and Leighton showed that there is a deterministic pricing
strategy achieving regret $O(\log \log T)$ and no pricing strategy
can achieve regret $o(\log \log T)$ where $T$ is the horizon length
(or equivalently, the number of customers). In the second scenario
the customer's evaluations are independent random samples from a
fixed unknown probability distribution on $[0, 1]$. This model
implies that the demand curve is the complement
 cumulative distribution function (CDF) of a certain random variable, which is more restrictive than a general demand model. For this scenario they
showed that there is a pricing strategy achieving regret $O(\sqrt{T
\log T})$. The last scenario considered in
\cite{Kleinberg&Leighton:03} makes no stochastic assumptions about
the demand model. It is shown that there is a pricing strategy
achieving regret $O((T^{2/3}(\log T)^{1/3})$ and no pricing strategy
can achieve regret $o(T^{2/3})$. Besbes  and Zeevi in
\cite{Besbes&Zeevi:09} considered a dynamic pricing problem under
both parametric and non-parametric uncertainty models. They obtained
lower bounds on the regret and developed algorithms that achieve a
regret close to the lower bound.

\section{Problem Statement}
Consider a seller who offers a particular product to customers who
come sequentially. For each customer, the seller proposes a price
$p$ from interval $[l,u]$; the customer accepts the price $p$ with
probability $\rho(p)$. We call function $\rho(.)$ the demand model.

Before the first customer arrives, nature chooses a demand model
from the set $\{\rho_i(.)\}_{i=0}^{N-1}$ as the ambient demand
model. This choice is unknown to the seller; but the seller has
knows the set of the potential demand models
$\{\rho_i(.)\}_{i=0}^{N-1}$ (as shown later, this assumption can be
relaxed in the proposed policy). If price $p_t$ is offered to the
$t$-th customer, the seller observes a binary random variable $o_t$
where $o_t=1$ (success) happens with probability $\rho(p_t)$ and
$o_t=0$ (failure) otherwise. The expected revenue at time $t$ if the
underlying demand model is $\rho_i(.)$ is
\begin{eqnarray}\nn
r_i(p_t)= p_t\rho_i(p_t).
\end{eqnarray}

The seller aims to maximize the total revenue by offering prices
sequentially. Under a horizon of length $T$, the pricing policy is
defined formally as the sequence $a=(a_1,a_2,\dots,a_T)$, where
$a_t$ is a map from past observations $\cup_{j=1}^{t-1}(p_j, o_j)$
to a choice of price in $[l,u]$. When there is no confusion, $a_t$
is also used to denote the action taken at time $t$.

The expected total revenue if the underlying demand model is
$\rho_i(.)$ can be written as
\begin{eqnarray*}
R_i^{a}(T) = \mbbE_i^{a}\{\sum_{t=1}^Tr_i(p_t)\}.
\end{eqnarray*}
 The regret defined as the expected revenue loss with respect to a seller who knows the underlying demand
model is given by
\begin{eqnarray*}
\Delta_i^{a}(T) = [Tr_i(p_i^*)-R_i^{a}(T)].
\end{eqnarray*}
%Define a Bayesian policy $\pi=(\pi_1,\pi_2,\dots,\pi_T)$, where
%$\pi_t$ maps from belief space $[0,1]$ to $[l,u]$. That is for a
%posterior belief $q_1,q_2,\dots$, we offer a sequence of prices
%$\{p_t\}$ where $p_t=\pi_t(q_{t-1})$.
It is easy to see that maximizing $R_i^{a}(T)$ is equivalent to
minimizing $\Delta_i^{a}(T)$.

\section{The Bayesian Approach}

In this section we give a brief review of the work by Harrison \etal
in~\cite{BDPP:10} developed within the Bayesian framework for the
special case of $N=2$. In the Bayesian approach, the seller is
equipped with priori knowledge of the underlying demand model: the
seller knows that the underlying demand model is $\rho_1(.)$ with
probability $q_0$. The objective of the seller is to maximize the
expected average revenue
\begin{eqnarray}\label{maxobjective}
\max_{a} R^{a}(T) = q_0 R_1^{a}(T) + (1 - q_0) R_0^{a}(T).
\end{eqnarray}
It is equivalent to minimizing the expected regret,
\begin{eqnarray}\label{minobjective}
\min_{a} \Delta^{a}(T) = q_0\Delta_1^{a}(T)+(1-q_0)\Delta_0^{a}(T)
\end{eqnarray}

For finite time horizon $T$, this problem can be formulated as a
partially observable Markov decision process (POMDP).

\textbf{State space}: $\mathcal {S}={0,1}$ represents demand model
$0$ or $1$ respectively.

\textbf{Action space}: $\mathcal{A}=[l,u]$ represents all possible
prices $p_t \in [l, u]$.

 \textbf{Observation space}:
$\mathcal{O}=\{0,1\}$, where $1$ represents success, and $0$
represents failure in sale.

\textbf{Transition probability}:
$p_{ij}^a=\Pr\{S_{t+1}=i|S_t=j,A_t=a\}$.

In our problem,
\begin{eqnarray}\nn
 p_{11}^a=p_{00}^a=1,\\\nn p_{01}^a=p_{10}^a=0.
\end{eqnarray}

\textbf{Observation model}: $h_{j,\theta}^a =
\Pr\{O_t=\theta|S_{t+1}=j,A_t=a\}$. We have
\begin{eqnarray}\nn
h_{00}^a&=&1-\rho_0(a),\\\nn h_{01}^a&=&\rho_0(a),\\\nn
 h_{10}^a&=&1-\rho_1(a),\\\nn
h_{11}^a&=&\rho_1(a).
\end{eqnarray}

\textbf{Immediate reward}: The instant reward in state $i$, when the
action $a$ is chosen and $\theta$ is observed is $r_{i,\theta}^a =
\theta a$.

\textbf{Policy}: $a = [a_1,\dots,a_T]$ where $a_t$ is a mapping from
the action and observation history
$\{\{A_1,\dots,A_{t-1}\},\{O_1,\dots, O_{t-1}\}\}$ to the action
space $\mathcal{A}$.

In POMDPs the sufficient statistics of the action and observation
history
\begin{eqnarray}\nn
H^{t-1} = \{\{A_1,\dots,A_{t-1}\},\{O_1,\dots, O_{t-1}\}\},
\end{eqnarray}
for choosing the optimal action at each time is the posterior
probability of the state at time  $t$. This probability is referred
to as \emph{belief} or \emph{information state} and is defined as
\begin{eqnarray}\label{belief}
q_t &=& \Pr \{S_t = 1 | H^{t-1}\},\\\nn Q_t &=& [1-q_t, q_t].
\end{eqnarray}

\textbf{Optimality equations}: The optimal policy at each time step
$t$ is a function of the current belief $q_t$ and is defined as
follows \cite{Smallwood:OR71}. Let $V_t(q_t)$ be the maximum total
expected reward obtained from time steps $t + 1$ to $T$.
\begin{eqnarray}
V_T(q_T) &=& \max_{a \in A} \{\sum_{i=0}^1 Q_T(i) \sum_{\theta=0}^1
h_{i,\theta}^a r_{i,\theta}^a\},\\\nn V_t(q_t) &=& \max_{a \in A} \{
\sum_{i=0}^1 Q_t(i) \sum_{\theta=0}^1 h_{i,\theta}^a r_{i,\theta}^a
+ \sum_{\theta =0}^1 \Pr\{O_t = \theta |
a_t\}V_{t+1}(\Gamma(q_t|a_t,\theta))\},
\end{eqnarray}
where $\Gamma(q_t|a_t,\theta)) = q_{t+1}$ is called the belief
update and is defined as
\begin{eqnarray}\label{beliefupdate}
q_{t+1} &=& \Pr \{S_{t+1} = 1| q_t, a_t, \theta_t\}\\\nn &=&
\frac{q_t\rho_1(a_t)^{\theta_t}(1-\rho_1(a_t))^{1-\theta_t}}{q_t\rho_1(a_t)^{\theta_t}(1-\rho_1(a_t))^{1-\theta_t}
+ (1 - q_t)\rho_0(a_t)^{\theta_t}(1-\rho_0(a_t))^{1-\theta_t}}.
\end{eqnarray}

The value function $V_0(q_0)$ is equivalent to \eqref{maxobjective},
and \eqref{minobjective} stated earlier.

The optimal policy of the above formulated POMDP offers the maximum
expected total revenue. However, finding the optimal policy to a
POMDP is P-SAPCE hard in general \cite{Papadimitriou:87}.

Harrison \etal in \cite{BDPP:10} considered the suboptimal myopic
policy and focused on whether finite regret (\ie complete learning)
can be achieved rather than minimizing the exact value of the
expected regret. The myopic Bayesian policy (MBP) at each step picks
the price that maximizes the current expected revenue
\begin{eqnarray}\nn
p_t = \arg\max_{p\in[l,u]} \{q_t r_1(p) + (1 - q_t)r_0(p)\}.
\end{eqnarray}
where $q_t$ is the belief at time $t$ defined in \eqref{belief} and
\eqref{beliefupdate}. For any pricing policy that offers prices from
the range $[l, u]$ it was shown in \cite{BDPP:10} that the belief
converges to a limit almost surely.

The limiting belief does not necessarily equal to $0$ or $1$
(complete learning); it is possible that the policy gets stuck at a
so-called uninformative price. The uninformative price is the price
at which both demand models $\rho_0(p)$ and $\rho_1(p)$ are equal.
In order to deal with this issue, $\delta$-discriminative policies
were considered. In particular a policy is $\delta$-discriminative
if
\begin{eqnarray}\nn |\rho_0(a_t(q)) - \rho_1(a_t(q))|
> \delta,~~~\forall t.
\end{eqnarray}
It was shown in~\cite{BDPP:10} that if a policy is
$\delta$-discriminative, the belief will converge to either $0$ or
$1$ exponentially fast. Therefore by restricting the MBP policy to
be $\delta$-discriminative (referred to CMBP in \cite{BDPP:10})
finite regret can be achieved.

%Another variation of MBP called adaptive MBP (AMBP) is also
%considered. In AMBP whenever the belief update is too close to its
%previous value $|q(t) - q(t-1)| < \epsilon$, the policy picks a
%predetermined experiment price that will assure learning. This
%policy achieves finite expected regret as well.

\section{The Non-Bayesian Approach}\label{non-Bayesian}

In this section we present our main result developed within the
non-Bayesian framework. We first consider $N=2$ and leave the
general case to Sec.~\ref{General}. We assume no prior probabilistic
knowledge on which demand model may pertain. We formulate this
problem as a two-armed bandit problem within the non-Bayesian
framework as follows. Let
\begin{eqnarray}\nn
p_k^* = \arg\max_{p \in [l, u]}p \rho_k(p).~~~~~~~~~~~~~k= 0, 1.
\end{eqnarray}
Arm $k$, $k=0,1$, is defined as the price $p_k^*$. If the underlying
demand model is $\rho_k(.)$, arm $k$ is the better arm. Activating
arm $k$ is defined as offering price $p_k^*$ to the costumers. The
reward random variable $X_k$ for arm $k$ is defined as the revenue
by proposing price $p_k^*$ at each time. The reward mean of arm $k$
is $\mbbE[X_k] = p_k^* \rho_i(p_k^*)$ when $\rho_i(.)$ is the
unknown underlying demand model. Throughout this paper we assume
that no $p_k^*$ is an uninformative price under any underlying
demand model, meaning that for both $k = 0, 1$,
\begin{eqnarray}\nonumber
\rho_1(p_k^*) \neq \rho_0(p_k^*).
\end{eqnarray}
This assumption is needed in order to achieve finite regret.

 Activating each arm (offering price $p_k^*$) gives i.i.d.
realizations of random reward $X_k$. Since both arms share the same
underlying demand model $\rho_i(.)$, arms are correlated.In other
words, observations from one arm also provide information on the
quality of the other arm.

 We define regret or revenue loss as the following:
\begin{eqnarray*}
\Delta_i= T p_i^*\rho_i(p_i^*)-\sum_{k=1}^2
\{p_{k}^*\rho_i(p_k^*)\mathbb{E}[T_k]\}
\end{eqnarray*}
where $T_k$ is the number of times that arm $k$ is selected, and
$\rho_i(.)$ is the true underlying demand model.

\subsection{The LRT Policy}\label{LRT}

In this section, we propose the LRT policy and establish its finite
regret. For each arm $k$, let $Y_k$ denote the seller's observation
when it activates arm $k$. It is a binary random variable with
mean$\rho_i(p_k^*)$.
 The LRT policy at each time step $t$ is a function $a_t$ mapping from
the observation space $\{y_1,y_2,\dots,y_i,\dots,y_{t-1}\}$ to the
action space $\{0,1\}$ (arms of the bandit). Specifically, in the
first step $t=1$, the LRT policy chooses an arm $k \in \{0,1\}$ by
flipping a fair coin. For each $t
> 1$, let
\begin{eqnarray}\label{lrtratio}
L(t) = \frac{1}{t}\sum_{j=1}^{t}\log{\frac{f_1(y_j)}{f_0(y_j)}},
\end{eqnarray}
where $f_i(y_j) = \Pr\{Y_{a_j} = y_j| \rho(.) = \rho_i(.)\}$ is the
probability of observing $y_j$ when action $a_j$ is chosen if the
underlying demand model were $\rho_i(.)$. Then the LRT policy at
each time step $t$ decides which arm to activate based on the
following
\begin{eqnarray}\label{LRTpolicy}
L(t-1)\left.
\begin{array}{l}
{a_t=1}\\
\quad\gtreqless\\
{a_t=0}
\end{array}\right.
0,
\end{eqnarray}
where $a_t$ denotes the action at time $t$. It is easy to see that
$L(t)$ can be updated recursively as
\begin{eqnarray}\nn
L(t)=\frac{1}{t}[(t-1)L(t-1)+ \log{\frac{f_1(y_t)}{f_0(y_t)}}].
\end{eqnarray}
The LRT policy is based on the maximum likelihood detector. In the
following theorem we show that the LRT policy has finite regret.

\begin{theorem}\label{Thm LRT}
The LRT policy achieves a bounded regret.
\end{theorem}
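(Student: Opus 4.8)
The plan is to reduce the regret to the expected number of times the inferior arm is activated, and then to bound this count by exploiting a positive drift in the log-likelihood ratio statistic. Assume without loss of generality that the true demand model is $\rho_1(.)$, so arm $1$ is optimal. Using $T_0 + T_1 = T$, the regret can be rewritten as
\begin{eqnarray*}
\Delta_1 = \left(p_1^*\rho_1(p_1^*) - p_0^*\rho_1(p_0^*)\right)\mathbb{E}[T_0] \defeq \delta_1\,\mathbb{E}[T_0],
\end{eqnarray*}
where $\delta_1 \geq 0$ because $p_1^*$ maximizes $p\rho_1(p)$. Thus it suffices to show that $\mathbb{E}[T_0] = \sum_{t=1}^{T}\Pr\{a_t = 0\}$ is bounded uniformly in $T$, i.e. that $\sum_{t=1}^{\infty}\Pr\{a_t = 0\} < \infty$.

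Next I would express the event $\{a_t = 0\}$ through the LRT statistic. By~\eqref{LRTpolicy}, arm $0$ is activated at time $t$ precisely when $L(t-1) < 0$, i.e. when $S_{t-1} \defeq \sum_{j=1}^{t-1} Z_j < 0$, where $Z_j \defeq \log\frac{f_1(y_j)}{f_0(y_j)}$ is the increment contributed by the $j$-th observation. The difficulty is that the $Z_j$ are not i.i.d.: the arm $a_j$ activated at time $j$ is a function of the past observations, so the distribution of $y_j$ (and hence of $Z_j$) depends on the history. A naive i.i.d. Chernoff bound does not apply directly, and handling this dependence is the main obstacle.

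The key observation that resolves this is that the conditional drift of $S_{t-1}$ is positive \emph{regardless} of which arm is played. Letting $\mathcal{F}_{j-1}$ denote the history up to time $j-1$ (which determines $a_j$), under the true model $\rho_1(.)$ we have
\begin{eqnarray*}
\mathbb{E}\left[Z_j \mid \mathcal{F}_{j-1}\right] = D\!\left(\rho_1(p_{a_j}^*)\,\|\,\rho_0(p_{a_j}^*)\right) \geq c \defeq \min_{k\in\{0,1\}} D\!\left(\rho_1(p_k^*)\,\|\,\rho_0(p_k^*)\right) > 0,
\end{eqnarray*}
where $D(\cdot\|\cdot)$ is the Kullback--Leibler divergence between Bernoulli distributions; strict positivity of $c$ follows from the standing informativeness assumption $\rho_0(p_k^*) \neq \rho_1(p_k^*)$ for both $k$. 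Hence $M_{t-1} \defeq S_{t-1} - \sum_{j=1}^{t-1}\mathbb{E}[Z_j\mid\mathcal{F}_{j-1}]$ is a martingale whose increments are bounded, say by $B$ in absolute value, since a Bernoulli log-likelihood ratio with fixed parameters takes finitely many values.

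Finally I would apply the Azuma--Hoeffding inequality to this martingale. Since $\sum_{j=1}^{t-1}\mathbb{E}[Z_j\mid\mathcal{F}_{j-1}] \geq (t-1)c$, the event $\{S_{t-1} < 0\}$ implies $\{M_{t-1} < -(t-1)c\}$, so that
\begin{eqnarray*}
\Pr\{a_t = 0\} = \Pr\{S_{t-1} < 0\} \leq \Pr\{M_{t-1} < -(t-1)c\} \leq \exp\!\left(-\frac{(t-1)c^2}{2B^2}\right).
\end{eqnarray*}
Summing this geometric bound gives $\mathbb{E}[T_0] \leq \sum_{t=1}^{\infty}\exp\!\left(-(t-1)c^2/(2B^2)\right) < \infty$, a finite constant independent of $T$. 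Therefore $\Delta_1 = \delta_1\,\mathbb{E}[T_0]$ is bounded, and by the symmetric argument with the two models exchanged, $\Delta_0$ is bounded as well, establishing finite regret.
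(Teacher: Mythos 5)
Your proof is correct, and while it follows the same overall skeleton as the paper's argument, it resolves the central technical difficulty by a genuinely different route. The paper proves Theorem~1 as the $\eta_0=\eta_1=0$ special case of Theorem~2: it likewise reduces regret to the expected number of suboptimal pulls, writes $\Pr\{L(t)<\eta_1\}$ via the tower property, but then conditions on the realized action sets $\Theta_k^t=\{j: a_j=k\}$, asserts that given this conditioning the summands $Z_h^k$ are \emph{independent} bounded variables with means $I(\rho_1(p(k))\|\rho_0(p(k)))>0$, and applies Hoeffding's inequality for independent variables. You instead keep the natural filtration, observe that the conditional drift $\mathbb{E}[Z_j\mid\mathcal{F}_{j-1}]=D(\rho_1(p_{a_j}^*)\|\rho_0(p_{a_j}^*))\ge c>0$ uniformly over the (history-dependent) action, and apply Azuma--Hoeffding to the compensated sum. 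The two routes yield exponential tail bounds of the same form with essentially the same constants, but your martingale argument is the more careful one: since $a_{j+1}$ is a deterministic function of $y_1,\dots,y_j$, conditioning on the action sequence constrains the joint law of the past observations, so the paper's claim that the $Z_h^k$ remain independent (and retain their unconditional means) after this conditioning is asserted rather than justified; Azuma--Hoeffding sidesteps this issue entirely because it only requires the bounded-increment martingale structure, which holds for any adaptive policy. Two negligible loose ends in your write-up: at $t=1$ the arm is chosen by a fair coin rather than by the sign of $S_0$, which adds at most $1/2$ to $\mathbb{E}[T_0]$; and the tie-breaking at $L(t-1)=0$ is immaterial since the same Azuma bound controls $\Pr\{S_{t-1}\le 0\}$.
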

\begin{proof}
The proof is a special case of the proof of Theorem~\ref{Thm XLRT}
by setting $\eta_0 = \eta_1 = 0$
\end{proof}

\subsection{The XLRT Policy}\label{XLRT} In this section we propose a
generalization of the LRT policy to improve its regret performance.
Based on the underlying detection nature of the problem, we
generalize the LRT policy by introducing an exploration price. We
aim to choose a price as our exploration price in order to
accelerate the learning of the underlying demand model.

In particular, this exploration price should be chosen such that
$\rho_1(p)$ and $\rho_0(p)$
 are most easily distinguished from random observations. Recognizing the detection nature of the problem,
  we adopt the Chernoff distance~\cite{Chernoff:52} which measures the distance between two distributions by the
   asymptotic exponential decay rate of the probability of detection errors. Specifically, for two probability
   density functions $f_0$ and $f_1$, the Chernoff distance is given by
\begin{eqnarray}\label{Chernoff-distance}
\mathcal{C}(f_0,f_1) &=& \underset{0 \leq t \leq 1}{\max} - \log
\mu(t),\\\nn \mu(t) &=& \int [f_0(x)]^{1-t} [f_1(x)]^t dx.
\end{eqnarray}
Since calculating the Chernoff distance involves an optimization
step, obtaining an analytical solution can be tedious. Johnson and
Sinanovic in \cite{Johnson&Sinanovic} stated that the harmonic
average of the Kullback-Leibler
divergences~\cite{Kullback&Leibler:51} of $f_0$ and $f_1$ can be a
good approximation of the Chernoff distance which is easy to
calculate. Namely,
\begin{eqnarray}
\hat{\mathcal{C}}(f_0,f_1) = \frac{1}{\frac{1}{I(f_0 ||
f_1)}+\frac{1}{I(f_1 || f_0)}},
\end{eqnarray}
 where the Kullback-Leibler divergence of $f_0$ with respect to $f_1$ is given by
\begin{eqnarray}
I(f_0 || f_1) = \int f_0(x) \log \frac{f_0(x)}{f_1(x)} dx.
\end{eqnarray}
Note that Kullback-Leibler divergence is not symmetric and the
Chernoff distance can be thought of as the symmetrized
Kullback-Leibler divergence.

 The exploration price is thus chosen as the price that maximizes the Chernoff distance $\mathcal{C}(\rho_1(p),\rho_0(p))$.
 Observations obtained by this price are the most informative in
distinguishing the two possible candidates of the demand model. The
exploration price is offered when the log-likelihood ratio $L(t)$ is
close to $0$, \ie when it is most uncertain which demand model
pertains. This is done by introducing two thresholds $\eta_1$, and
$-\eta_0$ instead of the single threshold $0$ in the LRT policy (see
\eqref{LRTpolicy}). The resulting policy is referred to XLRT as
detailed below.

Set the exploration price as
\begin{eqnarray}
p_x = \arg\max_{p \in [l,u]} \{\mathcal{C}(\rho_1(p),\rho_0(p))\}.
\end{eqnarray}

%
%where $I(\rho_1(p), \rho_0(p))$ is the Kullback-Leibler divergence
%between two Bernoulli random variables with means $\rho_1(p)$, and
%$\rho_0(p)$ respectively.
%\begin{eqnarray}\nn
%I(\rho_1(p), \rho_0(p)) = \rho_1(p) \log \frac{\rho_1(p)}{\rho_0(p)}
%+ (1-\rho_1(p)) \log \frac{1 -\rho_1(p)}{1 - \rho_0(p)}.
%\end{eqnarray}

%Since the Kullback-Leibler divergence is not symmetric, the
%arithmetic mean of $I(\rho_1(p), \rho_0(p))$ and $I(\rho_0(p),
%\rho_1(p))$ is chosen as the metric to determine the exploration
%price.

 At each step $t$:
\begin{eqnarray}\label{XLRTpolicy} p_t = \left\{\begin{array}{ccc}
p^*_0 & L(t-1) < -\eta_0\\
p_x & -\eta_0 \leq L(t-1) \leq \eta_1\\
p^*_1 & \eta_1 \leq L(t-1)\\
\end{array}
\right.,
\end{eqnarray}
where $L(t-1)$ is the log likelihood ratio given in~\eqref{lrtratio}
and the two thresholds should satisfy the following conditions:
\begin{eqnarray}\nn \eta_1 <
\min\{I(\rho_1(p_1^*), \rho_0(p_1^*)), I(\rho_1(p_x), \rho_0(p_x)),
I(\rho_1(p_0^*),\rho_0(p_0^*))\},\\\nn \eta_0 <
\min\{I(\rho_0(p_1^*), \rho_1(p_1^*)), I(\rho_0(p_x), \rho_1(p_x)),
I(\rho_0(p_0^*), \rho_1(p_0^*))\}.
\end{eqnarray}

This policy differs from the LRT policy only when the likelihood
ratio $L(t)$ is close to zero (indicating a greater degree of
uncertainty on the underlying demand model). At such time instants,
XLRT chooses the price $p_x$ that is most informative in learning
the underlying demand model. Simulation examples demonstrate that
XLRT policy improves the performance of the LRT policy (see
Sec.~\ref{simulation}).

\begin{theorem}\label{Thm XLRT}
The XLRT policy achieves a bounded regret.
\end{theorem}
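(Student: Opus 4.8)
The plan is to show that the expected number of time steps at which the XLRT policy offers a price other than the optimal one is finite, independently of $T$; bounded regret then follows at once. Assume without loss of generality that the true demand model is $\rho_0(.)$, so that $p_0^*$ is the optimal price; the case $\rho_1(.)$ is symmetric with the roles of $\eta_0$ and $\eta_1$ interchanged. Since the offered prices are confined to the finite set $\{p_0^*, p_1^*, p_x\}$ and each sale yields bounded revenue, every step at which $p_t \neq p_0^*$ incurs a per-step revenue loss bounded by a constant $C$. By the policy rule \eqref{XLRTpolicy}, $p_t \neq p_0^*$ holds exactly when $L(t-1) \geq -\eta_0$. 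Hence it suffices to prove that $\sum_{t=1}^{\infty} \Pr\{L(t-1) \geq -\eta_0\} < \infty$, after which $\Delta_0 \leq C \sum_{t=1}^{\infty} \Pr\{L(t-1) \geq -\eta_0\}$ is finite.

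First I would analyze the drift of $L(t)$ under $\rho_0(.)$. Writing the increment at step $j$ as $\ell_j = \log(f_1(y_j)/f_0(y_j))$, its conditional mean given the chosen action $a_j$ (equivalently, the offered price $p_{a_j} \in \{p_0^*, p_1^*, p_x\}$) equals $-I(\rho_0(p_{a_j}), \rho_1(p_{a_j}))$, the negative Kullback--Leibler divergence at the offered price. The threshold condition $\eta_0 < \min\{I(\rho_0(p_1^*),\rho_1(p_1^*)), I(\rho_0(p_x),\rho_1(p_x)), I(\rho_0(p_0^*),\rho_1(p_0^*))\}$ guarantees that this conditional mean is at most $-\gamma$ for a constant $\gamma > \eta_0$, regardless of which of the three prices is offered. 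Consequently the event $\{L(t-1) \geq -\eta_0\}$ is a deviation of the empirical average $L(t-1)$ above its uniformly negative drift by at least $\gamma - \eta_0 > 0$.

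The difficulty, which I expect to be the main obstacle, is that the increments $\ell_j$ are not i.i.d.: the offered price $p_{a_j}$, and hence the distribution of the observation $y_j$, depends on the entire past through $L(j-1)$. I would therefore handle the dependence by a martingale argument. Decompose $\ell_j = Z_j + \mu_j$, where $\mu_j = \mathbb{E}[\ell_j \mid a_j] \leq -\gamma$ is the conditional drift and $Z_j = \ell_j - \mu_j$ forms a martingale difference sequence with respect to the natural filtration generated by past actions and observations. Because each $\rho_i(p)$ for $p \in \{p_0^*, p_1^*, p_x\}$ lies strictly in $(0,1)$, the binary log-likelihood ratios $\ell_j$, and thus the $Z_j$, are uniformly bounded. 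Applying the Azuma--Hoeffding inequality to $\sum_{j=1}^{t-1} Z_j$ yields
\begin{eqnarray*}
\Pr\{L(t-1) \geq -\eta_0\} \leq \Pr\Big\{\tfrac{1}{t-1}\textstyle\sum_{j=1}^{t-1} Z_j \geq \gamma - \eta_0\Big\} \leq \exp\big(-c\,(t-1)\big)
\end{eqnarray*}
for a constant $c > 0$ depending on $\gamma - \eta_0$ and the uniform increment bound.

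Summing this geometric series over $t$ gives a finite bound on the expected number of non-optimal prices, and hence on $\Delta_0$; the symmetric treatment of the true model $\rho_1(.)$ uses the analogous threshold condition on $\eta_1$. The crucial point is that the martingale decomposition, combined with the uniform lower bound $\gamma$ on the drift supplied by the threshold conditions, is what makes the large-deviation estimate go through despite the increments being neither independent nor identically distributed. Finally, setting $\eta_0 = \eta_1 = 0$ recovers Theorem~\ref{Thm LRT} as a special case, since the same estimate applies with $\gamma$ taken as the minimum divergence at $p_0^*$ and $p_1^*$ alone (the exploration price $p_x$ never being offered).
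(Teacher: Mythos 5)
Your proof is correct, and its overall skeleton matches the paper's: assume a fixed true model, observe that every suboptimal price selection corresponds to the event that $L(t-1)$ lies on the wrong side of the relevant threshold, show that the probability of this event decays exponentially in $t$ because the threshold condition keeps the drift of the log-likelihood ratio uniformly bounded away from the threshold, and sum the geometric series to bound the expected number of suboptimal selections (hence the regret) by a constant. Where you genuinely diverge is in the concentration step. The paper handles the dependence of the increments on the past by conditioning on the partition $\cup_{k}\Theta_k^t$ of time slots according to which price was offered, asserting that conditioned on this partition the increments are independent with means $I(\rho_1(p(k))\|\rho_0(p(k)))$, and then invoking Hoeffding's inequality for independent bounded variables. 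That conditioning argument is delicate: the sets $\Theta_k^t$ are themselves functions of the observations, so conditioning on them does in principle perturb the joint law of the $y_j$'s, and the paper's claim that the event is ``independent of $\{a_j\}_{j=1}^t$'' is stated rather than proved. Your martingale-difference decomposition $\ell_j = Z_j + \mu_j$ with $\mu_j = \mathbb{E}[\ell_j \mid \mathcal{F}_{j-1}] \leq -\gamma$, followed by Azuma--Hoeffding, sidesteps this issue entirely and is arguably the cleaner and more rigorous route to the same exponential bound; the paper's version, when it works, buys nothing extra here since both yield a bound of the form $\exp(-ct)$ with a constant depending only on the divergence gap and the uniform bound on the increments. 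Your closing remark that $\eta_0=\eta_1=0$ recovers Theorem~\ref{Thm LRT} is exactly how the paper derives that theorem as well.
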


\begin{proof}
Without loss of generality, we assume that the underlying demand
model is $\rho_1(.)$. Let $M_e$ denote the expected number of times
that the XLRT policy chooses the non-optimal price.
\begin{eqnarray}\nonumber
M_e =
\mathbb{E}[\sum_{t=1}^{T}\mathbf{1}\{\frac{1}{t}\sum_{j=1}^t\log{\frac{f^{a_j}_1(y_j)}{f^{a_j}_0(y_j)}}<
\eta_1\}] &\leq&
\sum_{t=1}^{\infty}\mathbb{E}[\mathbf{1}\{\frac{1}{t}\sum_{j=1}^t\log{\frac{f^{a_j}_1(y_j)}{f^{a_j}_0(y_j)}}<
\eta_1\}]\\\nonumber
&=&\sum_{t=1}^{\infty}\Pr\{\frac{1}{t}\sum_{j=1}^t\log{\frac{f^{a_j}_1(y_j)}{f^{a_j}_0(y_j)}}<
\eta_1\},
\end{eqnarray}
where $\mathbf{1}\{.\}$ is the indicator function. Note that both
the action $a_j$ and the observation $y_j$ at time $j$ are random
variables. To simplify the notation, let $Z_j^{a_j} =
\log{\frac{f^{a_j}_1(y_j)}{f^{a_j}_0(y_j)}}$. We then have
\begin{eqnarray}\nonumber
\Pr\{\frac{1}{t}\sum_{j=1}^t\log{\frac{f^{a_j}_1(y_j)}{f^{a_j}_0(y_j)}}<\eta_1\}
&=& \Pr\{\frac{1}{t}\sum_{j=1}^{t} Z_j^{a_j} < \eta_1\}\\\nonumber
&=& \mathbb{E}[\Pr\{\frac{1}{t}\sum_{j=1}^{t} Z_j^{a_j} < \eta_1 |
\{a_j\}_{j=1}^t \}].
\end{eqnarray}
where the expectation is over the action sequence $\{a_j\}_{j=1}^t$.

Notice that the action $a_j$ at each time $j$ takes three possible
values: $\{p_0^*, p_x, p_1^*\}$. We label these three prices as
$p(1) = p_0^*$, $p(2) = p_x$, and $p(3)= p_1^*$, and let $Z_j^k =
\log
\frac{f_1^k(y_j)}{f_0^k(y_j)}=\log\frac{f_1^{a_j=p(k)}(y_j)}{f_0^{a_j=p(k)}(y_j)}$
denote the log-likelihood ratio conditioned on that price $p(k)$
($k=1,2,3$) is chosen. It is easy to see that for each $k = 1, 2,
3$, $Z_j^k$'s are i.i.d. binary random variables taking the values
$\{\log{\frac{f^k_1(0)}{f^k_0(0)}},
\log{\frac{f^k_1(1)}{f^k_0(1)}}\}$ with probabilities
$1-\rho_1(p(k))$ and $\rho_1(p(k))$, respectively. Since the
underlying demand model is $\rho_1(.)$, we have
\begin{eqnarray}\nonumber
\mathbb{E}[Z_j^k] =
\mathbb{E}_1[\log{\frac{f^k_1(y_j)}{f^k_0(y_j)}}] =
I(\rho_1(p(k))||\rho_0(p(k))) > 0,
\end{eqnarray}
where $I(\alpha || \beta)$ is the Kullback-Leibler divergence of two
Bernoulli random variables with means $\alpha$, and $\beta$.

Let $m_k = \min \{\log{\frac{f^k_1(0)}{f^k_0(0)}},
\log{\frac{f^k_1(1)}{f^k_0(1)}}\}$, $M_k = \max
 \{\log{\frac{f^k_1(0)}{f^k_0(0)}},
\log{\frac{f^k_1(1)}{f^k_0(1)}}\}$, $m = \min \{m_1, m_2, m_3\}$,
and $M = \max \{M_1, M_2, M_3\}$. $Z_j^k$'s  are independent and
bounded in the interval $[m, M]$ for all $k= 1,2,3$.

Let $\Theta_k^t = \{j: a_j = k, j \leq t\}$ for $k \in \{1, 2, 3\}$,
then
\begin{eqnarray}\nonumber
\Pr\{\frac{1}{t}\sum_{j=1}^{t} Z_j^{a_j} < \eta_1\}  &=&
\mathbb{E}[\Pr\{\frac{1}{t}\sum_{j=1}^{t} Z_j^{a_j} < \eta_1 |
\{a_j\}_{j=1}^t \}]\\\nonumber &=&
\mathbb{E}[\Pr\{\frac{1}{t}\sum_{j=1}^{t} Z_j^{a_j} < \eta_1 |
\{a_j\}_{j=1}^t, \cup_{k=1}^3 \Theta_k^t \}]\\\nonumber &=&
\mathbb{E}[\Pr\{\frac{1}{t}\sum_{k=1}^3\sum_{h \in \Theta_k^t}
Z_h^{k} < \eta_1 | \{a_j\}_{j=1}^t, \cup_{k=1}^3 \Theta_k^t \}]
\\\nonumber &=& \mathbb{E}[\Pr\{\frac{1}{t}\sum_{k=1}^3\sum_{h \in \Theta_k^t}
Z_h^{k} < \eta_1 | \cup_{k=1}^3 \Theta_k^t \}] \\\nonumber &=&
\mathbb{E}[\Pr\{\frac{1}{t}\sum_{k=1}^3 \sum_{h \in \Theta_k^t}
(Z_h^{k} - \mathbb{E}[Z_h^{k}]) < - (\frac{1}{t}\sum_{k=1}^3\sum_{h
\in \Theta_k^t}\mathbb{E}[Z_h^{k}] - \eta_1) | \cup_{k=1}^3
\Theta_k^t \}].
\end{eqnarray}
The second equality is because the sigma-algebra generated by
$\cup_{k=1}^3 \Theta_k^t$ is subset of the  sigma-algebra generated
by $\{a_j\}_{j=1}^t$. The fourth equality is because the conditional
event $\{\frac{1}{t}\sum_{k=1}^3\sum_{h \in \Theta_k^t} Z_h^{k} <
\eta_1 | \cup_{k=1}^3 \Theta_k^t\}$ is independent of
$\{a_j\}_{j=1}^t$.

 Note that conditioned on
$\cup_{k=1}^3 \Theta_k^t$, the sum $\sum_{k=1}^3\sum_{h \in
\Theta_k^t} (Z_h^{k} - \mathbb{E}[Z_h^{k}])$ is the sum of $t$
independent zero mean random variables. The rest of the proof is
based on the following Hoeffding's inequality.

\emph{Hoeffding's inequality}: Let $X_i$'s be independent random
variables that are almost surely bounded in $[m_i, M_i]$, i.e., $\Pr
\{X_i \in [m_i, M_i]\} = 1$, then for some $\alpha > 0$,
\begin{eqnarray}
\Pr \{ \frac{1}{t} \sum_{i=1}^t (X_i - \mathbb{E}[X_i]) < -\alpha\}
\leq \exp\{-\frac{2\alpha^2 t^2}{\sum_{i=1}^t (M_i-m_i)^2}\}
\end{eqnarray}

Therefore Hoeffding's inequality can be used for the conditional
probability inside the expectation for independent random variables
$Z_h^k$'s that are bounded in $[m, M]$ to obtain
\begin{eqnarray}\nonumber
 & &\Pr\{\frac{1}{t}\sum_{k=1}^3\sum_{h\n \Theta_k^t} (Z_h^k - \mathbb{E}[Z_h^k])
< - (\frac{1}{t}\sum_{k=1}^3\sum_{h \in
\Theta_k^t}\mathbb{E}[Z_h^{k}] - \eta_1) | \cup_{k=1}^3 \Theta_k^t
\}\\\nonumber &\leq& \Pr\{\frac{1}{t}\sum_{k=1}^3\sum_{h \in
\Theta_k^t} (Z_h^k - \mathbb{E}[Z_h^k]) <- (\min_{k
=1,2,3}\{I(\rho_1(p(k))||\rho_0(p(k)))\} - \eta_1) | \cup_{k=1}^3
\Theta_k^t \}\\\nonumber &=& \Pr\{\frac{1}{t}\sum_{k=1}^3\sum_{h \in
\Theta_k^t} (Z_h^k - \mathbb{E}[Z_h^k]) <- a | \cup_{k=1}^3
\Theta_k^t\} \leq \exp\{-\frac{2a^2t}{(M - m)^2}\} =
\exp\{-\frac{t}{C}\},
\end{eqnarray}
where $a = \underset{{k
=1,2,3}}{\min}\{I(\rho_1(p(k))||\rho_0(p(k)))\} - \eta_1$ and $C =
\frac{(M - m)^2}{2a^2}$.
 Recall that the thresholds are chosen such that $\eta_1 < \underset{{k =1,2,3}}{\min}\{I(\rho_1(p(k))||\rho_0(p(k)))\}$
based on definition. Therefore $a > 0$. Hence
\begin{eqnarray}\nonumber
\Pr\{\frac{1}{t}\sum_{j=1}^t Z_j^{a_j}< \eta_1\} =
\mathbb{E}[\Pr\{\frac{1}{t}\sum_{j=1}^{t} Z_j^{a_j} < \eta_1 |
\{a_j\}_{j=1}^t \}] &\leq&
\mathbb{E}[\Pr\{\frac{1}{t}\sum_{k=1}^3\sum_{h \in \Theta_k^t}
(Z_h^k - \mathbb{E}[Z_h^k]) <- a  | \cup_{k=1}^3
\Theta_k^t\}]\\\nonumber &\leq& \mathbb{E}[\exp\{-\frac{t}{C}\}] =
\exp\{-\frac{t}{C}\}.
\end{eqnarray}
 Hence
\begin{eqnarray}\nonumber
M_e \leq
\sum_{t=1}^{\infty}\Pr\{\frac{1}{t}\sum_{j=1}^t\log{\frac{f^{a_j}_1(y_j)}{f^{a_j}_0(y_j)}}<\eta_1\}
\leq  \sum_{t=1}^{\infty} \exp\{-\frac{t}{C}\} \leq
\int_{t=0}^{\infty} \exp\{-\frac{t}{C}\} = C < \infty.
\end{eqnarray}
It shows that the expected number of times the non-optimal price is
chosen is bounded by a finite number ${C}$. Therefore regret for the
LRT policy is bounded above by ${C}$ multiplied by a constant.
\end{proof}

\section{Extension to Finite Space Demand Uncertainty}\label{General}
In this section we extend the LRT and XLRT policies to handle
finite-space demand uncertainty where the underlying
 demand model is taken from a set $\{\rho_0(.), \ldots,
\rho_{N-1}(.)\}$ with an arbitrary finite cardinality $N$.

This problem can be formulated as a multi-armed bandit problem in
the same way as in section \ref{non-Bayesian}. Arm $k$, $k=0,
\ldots, N-1$, is defined as the price $p_k^*$ where
\begin{eqnarray}\nn
p_k^* = \arg\max_{p \in [l, u]}p \rho_k(p).~~~~~~~~~~~~~k= 0,
\ldots, N-1.
\end{eqnarray}
 If the
underlying demand model is $\rho_k(.)$, arm $k$ is the best arm. As
mentioned earlier in order to achieve finite regret we assume that
no optimal price is uninformative under any demand model. In other
words for all $j, h , k \in \{0, \ldots, N-1\}$, $j \neq h$,
\begin{eqnarray}\nonumber
\rho_j(p_k^*) \neq \rho_h(p_k^*).
\end{eqnarray}
%Activating arm $k$ is defined as offering price $p_k^*$ to the
%costumers. The reward random variable $X_k$ for arm $k$ is defined
%as the revenue by proposing price $p_k^*$ at each time. The reward
%mean of arm $k$ is $\mbbE[X_k] = p_k^* \rho_i(p_k^*)$ when
%$\rho_i(.)$ is the underlying demand model.
%Activating each arm (offering prices $p_k^*$) gives i.i.d.
%realizations of reward random variables $X_k$. Since both arms share
%the same underlying demand model $\rho_i(.)$, arms are correlated.
 The regret is also defined in a similar way as
\begin{eqnarray*}
\Delta_i= T p_i^*\rho_i(p_i^*)-\sum_{k=0}^{N-1}
\{p_{k}^*\rho_i(p_k^*)\mathbb{E}[T_k]\}
\end{eqnarray*}
where $T_k$ is the number of times that arm $k$ is selected.

We present the extended versions of LRT and XLRT (referred to as
ELRT and EXLRT) policies for an arbitrary $N$ number of potential
demand models. We also show that the proposed ELRT policy achieves
finite regret. Similarly, for each arm $k$, define the binary random
variable $Y_k \in \{0, 1\}$ with mean $\rho_i(p_k^*)$. $Y_k$ is the
random variable indicating the seller's observation when it
activates arm $k$ (\ie offers price $p_k^*$). The ELRT policy at
each time step $t$ is a function $a_t$ mapping from the observation
space $\{y_1,y_2,\dots,y_i,\dots,y_{t-1}\}$ to the action space
$\{0, \ldots, N-1\}$ (arms of the bandit). For the first step $t=1$,
choose an arm uniformly from $k \in \{0, \ldots, N-1\}$. For the
time step $t
> 1$, let
\begin{eqnarray}
L_{i,h}(t) =
\frac{1}{t}\sum_{j=1}^{t}\log{\frac{f_i(y_j)}{f_h(y_j)}},
\end{eqnarray}
where $f_i(y_j) = \Pr\{Y_{a_j} = y_j| \rho(.) = \rho_i(.)\}$ is the
probability of observing $y_j$ when action $a_j$ is chosen if the
underlying demand model is $\rho_i(.)$.

The ELRT policy at each time step $t$ selects arm $a_t = k$ for
which
%Let $\hat{i}(1)=0$ and $h(n)=n+1$, then do the following for
%$n=0,\ldots,
% N-2$,
\begin{eqnarray}
L_{k,j}(t-1) > 0,~~~ \forall j \in \{0, \ldots, N-1\}, j \neq k.
\end{eqnarray}
%\begin{displaymath}
%L_{\hat{i}(n),h(n)}(t-1)\left.
%\begin{array}{l}
%{\hat{i}(n+1)=\hat{i}(n)}\\
%\quad\gtreqless\\
%{\hat{i}(n+1)=h(n)}
%\end{array}\right.
%0.
%\end{displaymath}
%Then $a_t = \hat{i}(N)$ where $a_t$ denotes the action at time $t$.

%The LRT policy chooses arm $a_t$ that the likelihood function is
%maximized if $\rho_{a_t}$ is the true demand model. In other words,
%the LRT policy at time $t$ chooses arm $a_t$ that its likelihood
%function $L_{a_t,h} > 0$, for $0\leq h \leq N - 1 $, $h \neq a_t$.

%The GLRT policy is based on the maximum likelihood detector. It aims
%to detect the underlying demand model and maximize the revenue at
%the same time. In the following theorem we show that the GLRT policy
%has finite regret.

\begin{theorem}\label{Thm GLRT}
The ELRT policy achieves a bounded regret.
\end{theorem}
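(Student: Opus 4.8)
The plan is to mirror the argument used for Theorem~\ref{Thm XLRT}, reducing the regret bound to showing that the expected number of time steps at which a non-optimal arm is activated is finite. Without loss of generality, assume the ambient demand model is $\rho_i(.)$, so that arm $i$ is the best arm and every other arm is suboptimal. Let $M_e$ denote the expected number of times the ELRT policy selects an arm other than $i$. Since the per-step revenue loss of any suboptimal arm is bounded by the finite constant $\max_{j} |p_i^*\rho_i(p_i^*) - p_j^*\rho_i(p_j^*)|$, it suffices to show $M_e < \infty$; the regret is then at most this constant multiplied by $M_e$.

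First I would translate the selection rule into an error event I can control. By construction the policy activates arm $k$ only when $L_{k,j}(t-1) > 0$ for every $j \neq k$; in particular, whenever $L_{i,j}(t-1) > 0$ holds for all $j \neq i$, arm $i$ is the unique arm meeting the criterion (since $L_{k,i} = -L_{i,k} < 0$ then rules out every competitor $k$), so the correct arm is chosen. Hence the probability of activating a suboptimal arm at step $t$ is at most $\Pr\{\exists j \neq i : L_{i,j}(t-1) \leq 0\}$, and a union bound over the finitely many indices $j \neq i$ gives
\begin{eqnarray}\nonumber
M_e \leq \sum_{t=1}^{\infty} \sum_{j \neq i} \Pr\{L_{i,j}(t) \leq 0\}.
\end{eqnarray}
Because $N$ is finite, it is enough to show that each inner sum $\sum_{t} \Pr\{L_{i,j}(t) \leq 0\}$ converges.

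To bound a single pairwise term I would reuse the conditioning-plus-Hoeffding device from Theorem~\ref{Thm XLRT}. Writing $Z_s = \log \frac{f_i(y_s)}{f_j(y_s)}$ and conditioning on the realized action sequence $\{a_s\}_{s=1}^t$, the sum $\sum_{s=1}^t Z_s$ decomposes into groups indexed by which arm was played; within each group the increments are i.i.d.\ and bounded, and under the true model $\rho_i(.)$ each increment corresponding to arm $p_k^*$ has conditional mean $I(\rho_i(p_k^*)||\rho_j(p_k^*)) > 0$. Applying Hoeffding's inequality to the conditional probability and then taking the outer expectation yields a bound of the form $\Pr\{L_{i,j}(t) \leq 0\} \leq \exp\{-t/C_{ij}\}$ for a finite constant $C_{ij}$, and summing the resulting geometric series over $t$ and over the $N-1$ values of $j$ gives $M_e < \infty$.

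The main obstacle I anticipate is the positivity that makes the Hoeffding exponent nondegenerate: verifying that a \emph{uniform} positive conditional drift is available for every increment regardless of which arm is chosen at that step. In the two-arm case this was immediate, but with $N$ models one must confirm that the generalized uninformative-price assumption of Section~\ref{General} (for all $j,h,k$ with $j \neq h$, $\rho_j(p_k^*) \neq \rho_h(p_k^*)$) forces $I(\rho_i(p_k^*)||\rho_j(p_k^*)) > 0$ for every pair $i \neq j$ and every arm $k$, so that $\delta_{ij} = \min_k I(\rho_i(p_k^*)||\rho_j(p_k^*)) > 0$ and the exponent does not collapse. Once this uniform drift is secured, the union bound over the finitely many competing models and the summation over $t$ are routine.
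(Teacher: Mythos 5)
Your proposal is correct and follows essentially the same route as the paper's own proof in the Appendix: reduce regret to the expected number of suboptimal selections, union-bound over the $N-1$ competing models $j\neq i$ via the pairwise events $\{L_{i,j}(t)\leq 0\}$, then condition on the action history and apply Hoeffding's inequality with the uniform positive drift $\min_k I(\rho_i(p_k^*)\|\rho_j(p_k^*))>0$ guaranteed by the no-uninformative-price assumption. The only difference is cosmetic: you make explicit the union-bound justification and the positivity check that the paper leaves implicit.
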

\begin{proof}
The proof is a direct generalization of the Theorem~\ref{Thm LRT}
and is given in Appendix for completeness.
\end{proof}
%\subsection{GXLRT Policy} In this section we generalize the XLRT policy proposed in section
%\ref{XLRT}.
Similarly, we can introduce exploration prices to improve the rate
of learning. Since there are now $N$ possible demand models, there
are $N-1$ possible exploration prices. At each time, when the
likelihood ration between the two models that are detected as the
most likely is close to $0$, the exploration price defined by the
Chernoff distance between these two demand models is offered.
Specifically, let
\begin{eqnarray}
p^{i,h}_x = \arg\max_{p \in [l,u]}
\{\mathcal{C}(\rho_i(p),\rho_h(p))\}.
\end{eqnarray}
At each time step $t$ the policy first finds the two most probable
demand models $d_1$ and $d_2$, where model $d_1$ satisfies
\begin{eqnarray}
L_{d_1,j}(t-1) > 0~~~~\forall j \in \{0, \ldots, N-1\}, j \neq d_1,
\end{eqnarray}
and model $d_2$ satisfies
\begin{eqnarray}
L_{d_2,j}(t-1) > 0~~~~\forall j \in \{0, \ldots, N-1\}, j \neq d_1,
j \neq d_2,
\end{eqnarray}
Then
\begin{eqnarray}\nn p_t = \left\{\begin{array}{ccc}
p^*_{d_1} & L_{d_1,d_2}(t-1) > \eta_{d_1,d_2}\\
p^{d_1,d_2}_x & 0 \leq L_{d_1,d_2}(t-1) \leq \eta_{d_1,d_2}\\
\end{array}
\right.,
\end{eqnarray}
where the threshold $\eta_{d_1,d_2}$ is chosen to satisfy the
following conditions:
\begin{eqnarray}\nn \eta_{d_1,d_2} <
\min \{I(\rho_{d_1}(p_{d_1}^*), \rho_{d_2}(p_{d_1}^*)),
I(\rho_{d_1}(p^{d_1,d_2}_x), \rho_{d_2}(p^{d_1,d_2}_x)),
I(\rho_{d_1}(p_{d_2}^*),\rho_{d_2}(p_{d_2}^*))\}.
\end{eqnarray}

%This policy differs from the LRT policy only when the metric
%$L_{d_1,d_2}(t)$ is fluctuating around zero indicating that the
%policy is not certain on which optimal price to choose. When such
%situation happens, XLRT offers the price that can help the seller
%learn the underlying demand model faster. Simulation examples
%demonstrate that XLRT policy improves the performance of the LRT
%policy.

%\section{Simulation Results}
%In this section we study the performance of the proposed policies
%LRT and XLRT.
%
%\begin{figure}[htp]
%\begin{center}
%\scalefig{0.5}\epsfbox{figures/mbpVSlrt.eps}\caption{LRT vs.
%CMBP}\label{fig:lrt}
%\end{center}
%\end{figure}
%
%Fig.~\ref{fig:lrt} shows the comparison of the LRT policy with the
%constrained MBP (CMBP) policy proposed in \cite{BDPP:10}. We used
%the demand models adopted in \cite{BDPP:10} with $\rho_0(p) = 1.4 -
%0.9 p$ and $\rho_1(p) = 0.8 - 0.3 p$ for the price range of $[0.5,
%1.5]$. The initial belief for the CMBP policy is set to be $0.5$ in
%order to have a fair comparison. As shown in Fig.~\ref{fig:lrt}, the
%LRT policy which uses neither any prior knowledge on the underlying
%demand model nor complete knowledge of the demand curves outperforms
%the CMBP policy.
%\begin{figure}[htp]
%\begin{center}
%%\psfrag{T0}[c]{\Large  $T_0$} \psfrag{T1}[c]{\Large $T_1$}
%\scalefig{0.5}\epsfbox{figures/xlrt.eps}\caption{XLRT vs.
%LRT}\label{fig:xlrt}
%\end{center}
%\end{figure}
% Fig.~\ref{fig:xlrt} shows the comparison of XLRT policy with LRT.
%We observe that introducing the exploration price in XLRT has
%considerably improved the regret.
\section{Simulations}\label{simulation}
In this section we study the performance of the proposed policies.
In Fig.~\ref{fig:cmbp1} and Fig.~\ref{fig:cmbp2} shows the
comparison of the LRT policy with the constrained MBP (CMBP)proposed
in \cite{BDPP:10} under a binary demand model sets chosen in
\cite{BDPP:10}. In these two examples, we consider the same demand
models used in the simulation studies in \cite{BDPP:10}.
 In particular, in Fig.~\ref{fig:cmbp1},
$\rho_0(p) = 1.4 - 0.9 p$ and $\rho_1(p) = 0.8 - 0.3 p$ for the
price range of $[0.5, 1.5]$, and in Fig.~\ref{fig:cmbp2}, $\rho_0(p)
=\frac{1}{1 + \exp(-10 + 10 p)}$ and $\rho_1(p) = \frac{1}{1 +
\exp(-1 + 0.5 p)}$ for the price range of $[0, 4]$. The initial
belief is chosen to be $q_0 = 0.5$. We observe that the proposed LRT
policy outperforms CMBP in both cases.

\begin{figure}[htp]
\begin{center}
%\psfrag{T0}[c]{\Large  $T_0$} \psfrag{T1}[c]{\Large $T_1$}
\scalefig{.7}\epsfbox{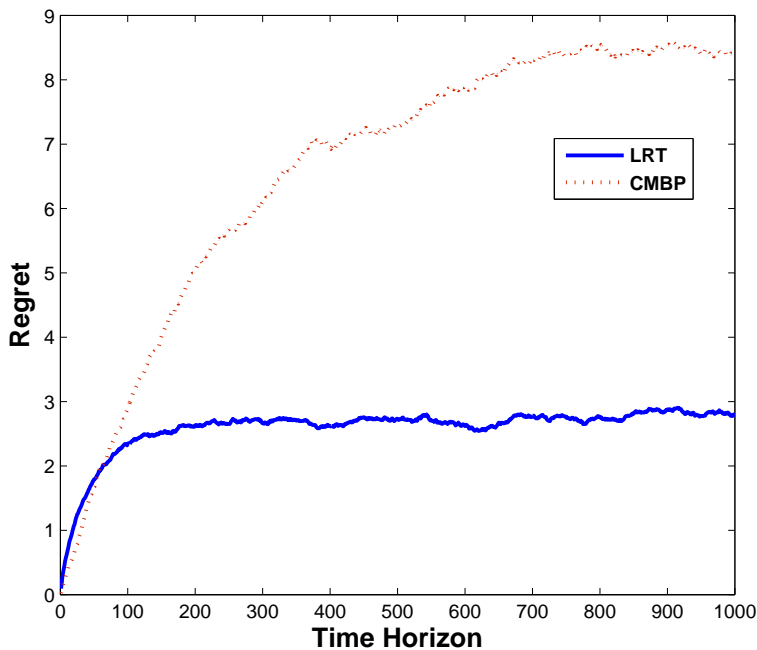}\caption{CMBP vs. LRT: Case
$1$}\label{fig:cmbp1}
\end{center}
\end{figure}

\begin{figure}[htp]
\begin{center}
%\psfrag{T0}[c]{\Large  $T_0$} \psfrag{T1}[c]{\Large $T_1$}
\scalefig{.7}\epsfbox{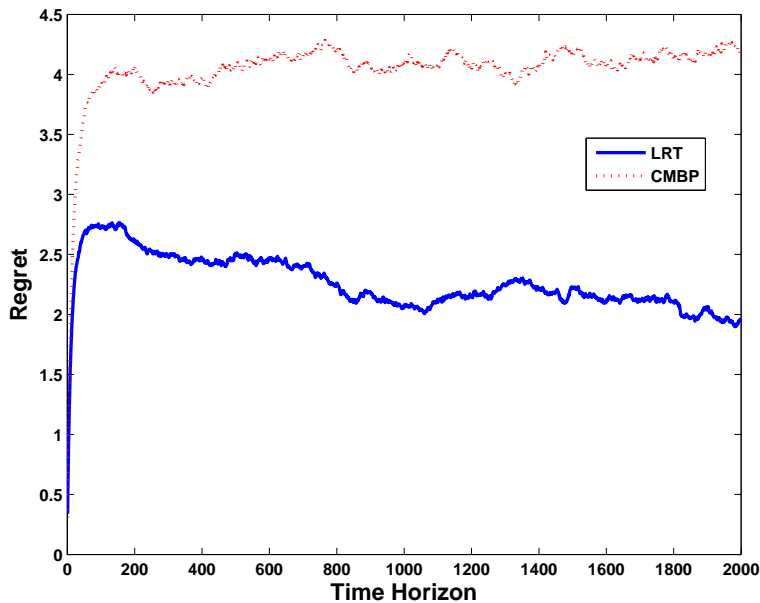}\caption{CMBP vs. LRT: Case
$2$}\label{fig:cmbp2}
\end{center}
\end{figure}
Fig.~\ref{fig:xlrt0} and Fig.~\ref{fig:xlrt1} show the comparison of
XLRT policy with LRT for the demand models $\rho_0(p) = 1.4 - 0.9 p$
and $\rho_1(p) = 0.8 - 0.3 p$ when the underlying demand model is
$\rho_0(.)$ and $\rho_1(.)$ respectively. We observe that
introducing the exploration price in XLRT considerably improves the
regret.
\begin{figure}[htp]
\begin{center}
%\psfrag{T0}[c]{\Large  $T_0$} \psfrag{T1}[c]{\Large $T_1$}
\scalefig{.7}\epsfbox{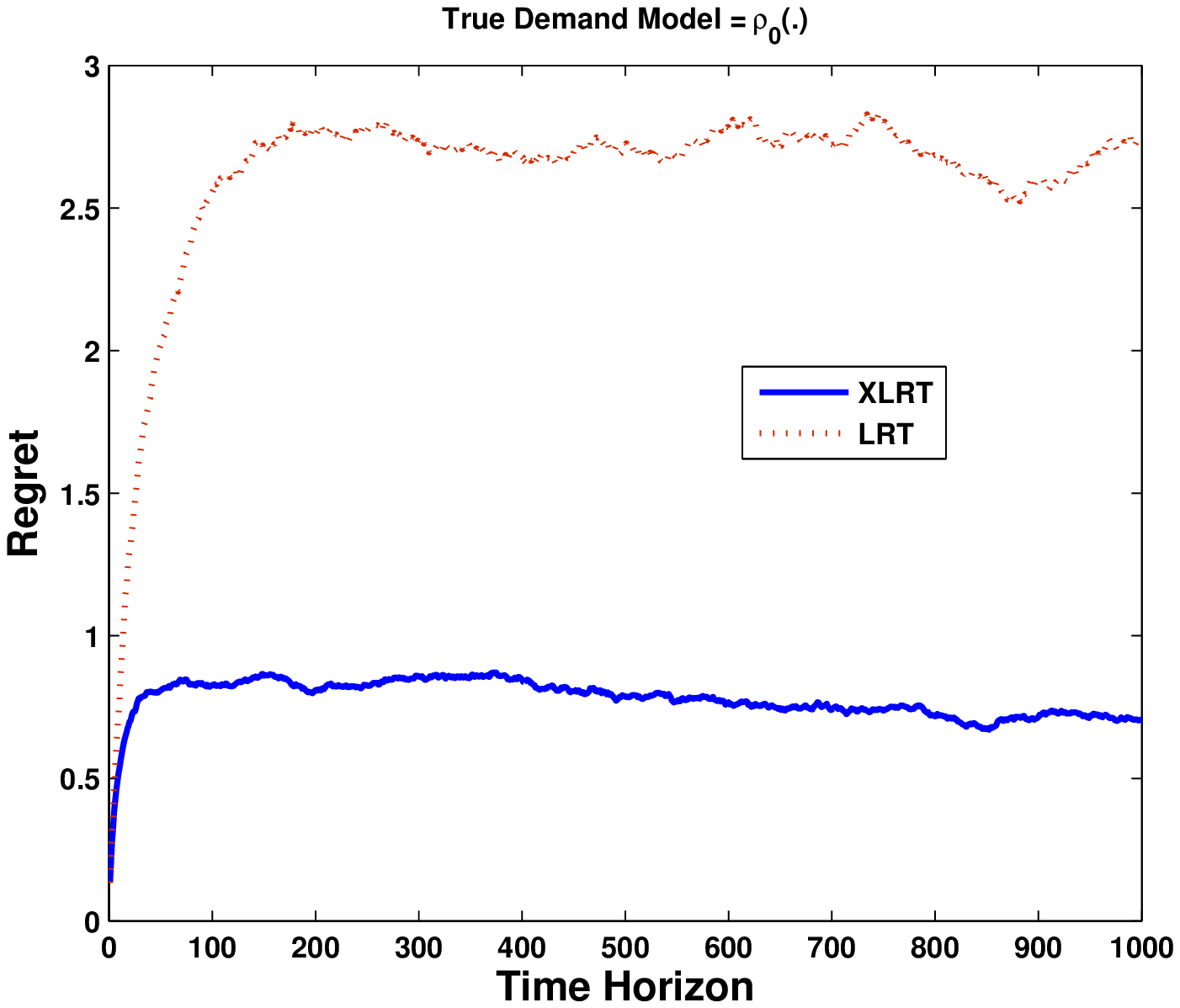}\caption{XLRT vs. LRT when
$\rho = \rho_0$}\label{fig:xlrt0}
\end{center}
\end{figure}
\begin{figure}[htp]
\begin{center}
%\psfrag{T0}[c]{\Large  $T_0$} \psfrag{T1}[c]{\Large $T_1$}
\scalefig{.7}\epsfbox{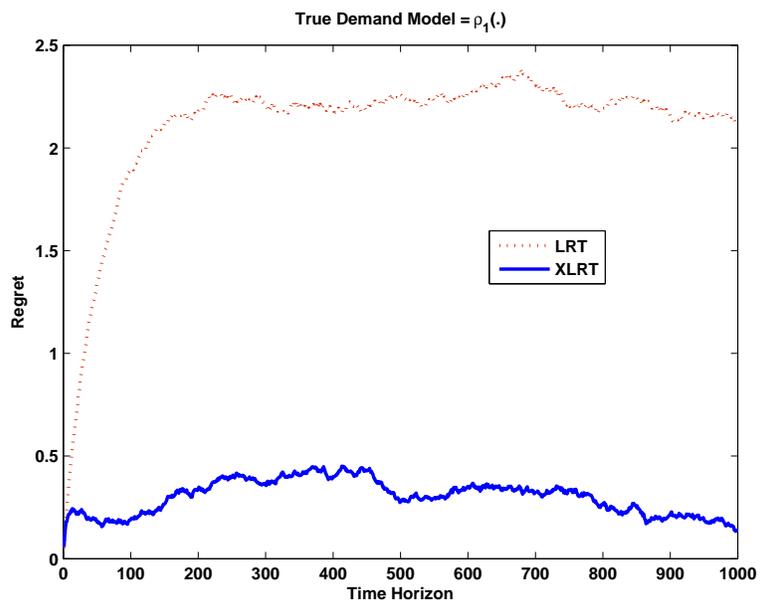}\caption{XLRT vs. LRT when
$\rho = \rho_1$}\label{fig:xlrt1}
\end{center}
\end{figure}

\section{conclusion}
The dynamic pricing problem when the underlying demand model is
unknown to the seller is considered where the demand model takes one
of $N$ possible forms where $N$ is any arbitrary number. A
non-Bayesian approach in which no prior knowledge on which demand
model is governing the market is studied. The problem is formulated
as a multi-armed bandit with correlated arms. A policy based on the
likelihood ratio test is developed that achieves finite regret. An
generalization of this policy is proposed by introducing an
exploration price that helps the seller to learn the underlying
demand model faster and improve the regret performance.

\section*{Appendix: Proof of Theorem \ref{Thm GLRT}}
Without loss of generality, we assume that the underlying demand
model is $\rho_i(.)$.

 Let $M_e$ denote the expected number of times that the ELRT policy chooses the non-optimal price.
\begin{eqnarray}\nonumber
M_e &\leq& \sum_{h=0,h\neq i}^{N-1}
\mathbb{E}[\sum_{t=1}^{T}\mathbf{1}\{\frac{1}{t}\sum_{j=1}^t\log{\frac{f^{a_j}_i(y_j)}{f^{a_j}_h(y_j)}}<0\}]\\\nonumber
&\leq& \sum_{h=0,h\neq i}^{N-1}
\sum_{t=1}^{\infty}\mathbb{E}[\mathbf{1}\{\frac{1}{t}\sum_{j=1}^t\log{\frac{f^{a_j}_i(y_j)}{f^{a_j}_h(y_j)}}<0\}]\\\nonumber
&=& \sum_{h=0,h\neq i}^{N-1}
\sum_{t=1}^{\infty}\Pr\{\frac{1}{t}\sum_{j=1}^t\log{\frac{f^{a_j}_i(y_j)}{f^{a_j}_h(y_j)}}<0\}.
\end{eqnarray}
 Note that both
the action $a_j$ and the observation $y_j$ at time $j$ are random
variables. To simplify the notation consider a fixed $h$ and let
$Z_j^{a_j} = \log{\frac{f^{a_j}_i(y_j)}{f^{a_j}_h(y_j)}}$. At each
time $j$, the policy $a_j$ chooses one of the $N$ possible prices
$p_k^*$'s, $k = 0, \ldots, N-1$. For each chosen price $p_k^*$,
$y_j$ is a Bernoulli random variable. Let $Z_j^k = \log
\frac{f_i^k(y_j)}{f_h^k(y_j)}=\log\frac{f_i^{a_j= k}(y_j)}{f_h^{a_j=
k}(y_j)}$ denote the log-likelihood ratio conditioned on that price
$p_k^*$ is chosen. It is easy to see that for each $k = 0, \ldots,
N-1$, $Z_j^k$'s are i.i.d. binary random variables taking the values
$\{\log{\frac{f^k_i(0)}{f^k_h(0)}},
\log{\frac{f^k_i(1)}{f^k_h(1)}}\}$ with probabilities $1 -
\rho_i(p_k^*)$ and $\rho_i(p_k^*)$ respectively. Since the
underlying demand model is $\rho_i(.)$, we have
\begin{eqnarray}\nonumber
\mathbb{E}[Z_j^k] =
\mathbb{E}_i[\log{\frac{f^k_i(y_j)}{f^k_h(y_j)}}] =
I(\rho_i(p_k^*)||\rho_h(p_k^*)) > 0.
\end{eqnarray}
%where $I(\alpha || \beta)$ is the Kullback Leibler divergence of two
%Bernoulli random variables with means $\alpha$, and $\beta$.
Let $m_k^h = \min \{\log{\frac{f^k_i(0)}{f^k_h(0)}},
\log{\frac{f^k_i(1)}{f^k_h(1)}}\}$, $M_k^h = \max
 \{\log{\frac{f^k_i(0)}{f^k_h(0)}},
\log{\frac{f^k_i(1)}{f^k_h(1)}}\}$, $m_h = \min \{m_1^h, \ldots,
m_N^h\}$, and $M_h = \max \{M_1^h, \ldots, M_N^h\}$. $Z_j^k$'s are
independent and bounded in the interval $[m_h, M_h]$. Following the
steps in the proof of Theorem \ref{Thm XLRT} for $\eta_0 = \eta_1 =
0$ and conditioning on the event $\cup_{k=0}^{N-1} \Theta_k^t$ as
the sufficient statistic for the action history $\{a_j\}_{j=1}^t$
one can get
\begin{eqnarray}\nonumber
\Pr\{\frac{1}{t}\sum_{j=1}^t\log{\frac{f^{a_j}_i(y_j)}{f^{a_j}_h(y_j)}}<0\}
\leq \exp\{-\frac{t}{C_h}\}
\end{eqnarray}
where $C_h = \frac{(M_h - m_h)^2}{2a_h^2}$ and $a_h = \min_{k =0,
\ldots, N-1}\{I(\rho_i(p_k^*)||\rho_h(p_k^*))$. Hence
\begin{eqnarray}\nonumber
M_e  \leq  \sum_{h=0,h\neq i}^{N-1}
\sum_{t=1}^{\infty}\Pr\{\frac{1}{t}\sum_{j=1}^t\log{\frac{f^{a_j}_i(y_j)}{f^{a_j}_h(y_j)}}<0\}
&\leq&  \sum_{h=0,h\neq i}^{N-1} \sum_{t=1}^{\infty}
\exp\{-\frac{t}{C_h}\}\\\nonumber &\leq&  \sum_{h=0,h\neq i}^{N-1}
\int_{t=0}^{\infty} \exp\{-\frac{t}{C_h}\} \leq (N-1) C < \infty,
\end{eqnarray}
where $C= \max_{h=0, h\neq i}^{N-1} \{C_h\}$.

 It shows that the expected
number of times the non-optimal price is chosen is bounded by a
finite number ${(N-1)C}$. Therefore regret for the LRT policy is
bounded above by ${(N-1)C}$ multiplied by a constant.

%%%%%%%%%% References %%%%%%%%%%%%%%%%%%%%%%%%%%%%%%%%%%%%%%%%%%%%%%%%%%
\bibliographystyle{ieeetr}

\begin{thebibliography}{12}

\bibitem{Rothschild:74}
M.~Rothschild, ``{A Two-armed Bandit Theory of Market Pricing}'', in
{\em Journal of Economic Theory}, pp. 185–202,  1974.


\bibitem{Bellman:56}
R.~Bellman, ``{A Problem in the Sequential Design of Experiments}'',
in {\em Sankhya-: The Indian Journal of Statistics}, vol. 16, no.
3/4, April 1956.

\bibitem{Gittins}
J. C. Gittins, ``{Bandit processes and dynamic allocation
indices}'', in {\em Journal of the Royal Statistical Society. Series
B}, vol. 41, pp. 148–177, 1979.

\bibitem{Lai&Robbins:85}
T. L. Lai and H. Robbins, ``{Asymptotically efficient adaptive
allocation rules}'', in {\em Advances in Applied Mathematics}, vol.
6, no. 1, pp. 4–22, 1985.


\bibitem{Agrawal:95}
R. Agrawal, ``{The Continuum-Armed Bandit Problem}'', in {\em SIAM
J. Control and Optimization}, vol. 33, no.6, pp. 1926-1951, 1995.

\bibitem{Kleinberg:04}
R.~Kleinberg, ``{Nearly Tight Bounds for the Continuum-Armed Bandit
Problem}'', in {\em Advances in Neural Information Processing
Systems (NIPS)}, pp. 697-704, 2004.







\bibitem{Aueretal:07}
P. Auer, R.~ Ortner, C.~ Szepesv\'{a}ri, ``{Improved Rates for the
Stochastic Continuum-Armed Bandit Problem}'', in {\em Lecture Notes
in Computer Science}, vol. 4539, pp. 454-468, 2007.


\bibitem{Cope:09}
E.~W.~ Cope, ``{Regret and Convergence Bounds for a Class of
Continuum-Armed Bandit Problem}'', in {\em IEEE Transactions on
Automatic Control}, vol. 54, no.6, June 2009.


\bibitem{Kleinberg&Leighton:03}
R.~Kleinberg,T.~Leighton, ``{The value of knowing a demand curve:
bounds on regret for online posted-price auctions}'', in {\em Proc.
44th IEEE Symposium on Foundations of Computer Science (FOCS)}, pp.
594-605, 2003.


\bibitem{McLennan:84}
A.~McLennan, ``{Price Dispersion and Incomplete Learning in the Long
Run},'' in {\em Journal of Economic Dynamics and Control}, Vol. 7,
No. 3, pp. 331–347, September 1984.




\bibitem{BDPP:10} J. Michael Harrison, N. Bora Keskin, and Assaf
Zeevi, ``{Bayesian Dynamic Pricing Policies: Learning and Earning
Under a Binary Prior Distribution}'', in {\em Management Science},
pp. 1-17, October 2011.

\bibitem{Easley&Kiefer:88}
D.~Easley, N.~M.~Kiefer, ``{Controlling a Stochastic Process with
Unknown Parameters}'', in {\em Econometrica}, vol. 56,  no. 5, pp.
1045–1064, 1988.

\bibitem{Aghion&etal:91}
P.~ Aghion, P.~Bolton,C.~Harris, and B.~Jullien, ``{Optimal Learning
by Experimentation}'', in {\em  The Review of Economic Studies},
vol. 58, no. 4, pp. 621–654, 1991.


\bibitem{Aviv&Pazgal:05}
Y.~Aviv, and A. Pazgal, ``{A Partially Observed Markov Decision
Process for Dynamic Pricing},'' in {\em  Management Science} Vol.
51, No. 9, pp. 1400–1416, September 2005.

\bibitem{Farias&Roy:09}
V.~F.~Farias, and B.~V.~Roy, ``{Dynamic Pricing with a Prior on
Market Response},'' in {\em  Institute for Operations Research and
the Management Sciences}  Vol. 58, No. 1, pp. 16-29, November 2009.



\bibitem{Keller&Rady:99}
G.~Keller,and S.~Rady, ``{Optimal Experimentation in a Changing
Environment},'' in {\em The Review of Economic Studies}, vol. 66,
no. 3, pp. 475–507, 1999.


\bibitem{Yossef&etal:02}
Z.~Bar-Yossef, K.~Hildrum, and F. Wu, ``{Incentive-compatible online
auctions for digital goods}'', in {\em  Proc. 13th Symp. on Discrete
Alg.} pp. 964-970, 2002.

\bibitem{Blum&etal:03}
 A. Blum, V. Kumar, A. Rudra, and F. Wu, ``{Online learning in online
auctions}'', in {\em  In Proc. 14th Symp. on Discrete Alg.} pp.
202-204, 2003.

\bibitem{Fiat&etal:02}
 A. Fiat, A. Goldberg, J. Hartline, and A. Wright, ``{ Competitive
generalized auctions}'',  in {\em  Proc. 34th ACM Symposium on the
Theory of Com- puting. ACM Press}, New York, 2002.

\bibitem{Goldberg&etal:01}
A. Goldberg, J. Hartline, and A. Wright, ``{ Competitive auctions
and digital goods}'', in {\em  Proc. 12th Symp. on Discrete Alg.},
pp. 735-744, 2001.

%\bibitem{Segal:03}
%I. Segal, ``{Optimal pricing mechanisms with unknown demand}'', in
%{\em Proc. The American economic review}, 2003.


\bibitem{Besbes&Zeevi:09}
O.~Besbes, and A.~Zeevi, ``{Dynamic Pricing Without Knowing the
Demand Function: Risk Bounds and Near-Optimal Algorithms},'' in {\em
Operations Research}, Vol. 57, No. 6, pp. 1407-1420,
November-December 2009.






%
%
%
%
%
%\bibitem{Auer:02ML}
%P. Auer, N. Cesa-Bianchi, and P. Fischer, ``{Finite-time analysis of
%the multiarmed bandit problem}'', in {\em Mach. Learn.}, vol.~47,
%pp.~235¨C256, 2002.
%
%
%
%\bibitem{Agrawal:1995}
%R. Agrawal, ``{Sample mean based index policies with $O(\log(n))$
%regret for the multi-armed bandit problem}'', in {\em Adv. Appl.
%Probab.}, vol. 27, no. 4, pp. 1054–1078, Dec. 1995.

\bibitem{Smallwood:OR71} R.~D.~Smallwood and E.~J.~Sondik,
``{The Optimal Control of Partially Observable Markov Processes over
a Finite Horizon}'', in {\em Operations Research }, pp. 1071-1088,
1971.

\bibitem{Papadimitriou:87}
C. H. Papadimitriou and J. N. Tsitsiklis, ``{The complexity of
Markov Decision Processes}'', in {\em Mathematics of Operations
Research}, vol. 12, no. 3, pp. 441-450, August 1987.

\bibitem{Chernoff:52}
H. Chernoff, ``{Measure of asymptotic efficiency for tests of a
hypothesis based on the sum of observations}'', in {\em Ann. Math.
Statist.}, vol. 23, pp. 493–507, 1952.

\bibitem{Johnson&Sinanovic}
D.H. Johnson and S. Sinanovic, ``{Symmetrizing the Kullback-Leibler
Distance}'', {\em Unreviewed Manscript} available:
http://www.ece.rice.edu/~dhj/resistor.pdf

\bibitem{Kullback&Leibler:51}
S. Kullback and R. A. Leibler, ``{ On information and
sufficiency}'',  in {\em Ann. Math. Statist.}, vol. 22, pp. 79–86,
1951.


\end{thebibliography}

\end{document}